\documentclass{article} 
\usepackage{iclr2022,times}
\usepackage{booktabs}

\usepackage[utf8]{inputenc} 
\usepackage{booktabs}
\usepackage[T1]{fontenc}    
\usepackage{hyperref}       
\usepackage{url}            
\usepackage{booktabs}       
\usepackage{amsfonts}       
\usepackage{nicefrac}       
\usepackage{microtype}      
\usepackage{url}
\usepackage{hyperref}
\usepackage{graphicx}
\usepackage{amssymb}
\usepackage{amsmath}



\def\figref#1{figure~\ref{#1}}





\def\eqref#1{equation~\ref{#1}}









\def\1{\bm{1}}










\DeclareMathAlphabet{\mathsfit}{\encodingdefault}{\sfdefault}{m}{sl}
\SetMathAlphabet{\mathsfit}{bold}{\encodingdefault}{\sfdefault}{bx}{n}


\def\gD{{\mathcal{D}}}

\def\gL{{\mathcal{L}}}

\def\gS{{\mathcal{S}}}

\def\gX{{\mathcal{X}}}
\def\gY{{\mathcal{Y}}}
\def\gZ{{\mathcal{Z}}}












\DeclareMathOperator*{\argmax}{arg\,max}

\usepackage{amsthm}
\usepackage{dsfont}
\usepackage[font=small]{caption}
\usepackage{subcaption}
\usepackage{multirow}
\usepackage{float}
\usepackage{wrapfig}
\newtheorem{theorem}{Theorem}

\usepackage{nicefrac}       
\usepackage{microtype}      
\usepackage{xcolor}  
\usepackage{cite}
\newtheorem{innercustomtheorem}{Theorem}

\usepackage{xspace}
\usepackage{enumitem}
\newcommand{\algoName}{\texttt{AuditAI}\xspace}
\usepackage{color}

\renewcommand{\figref}[1]{Figure\,\ref{fig:#1}}

\newcommand{\eqnref}[1]{{Eq.\ (\ref{eqn:#1})}}

\definecolor{mydarkblue}{rgb}{0,0.08,0.45}
\hypersetup{ %
    pdftitle={},
    pdfauthor={},
    pdfsubject={},
    pdfkeywords={},
    pdfborder=0 0 0,
    pdfpagemode=UseNone,
    colorlinks=true,
    linkcolor=mydarkblue,
    citecolor=mydarkblue,
    filecolor=mydarkblue,
    urlcolor=mydarkblue,
    pdfview=FitH}

\setlength{\abovecaptionskip}{0.5mm}
\setlength{\belowcaptionskip}{0.5mm} 
\setlength{\textfloatsep}{1.5mm}
\setlength{\dbltextfloatsep}{1.5mm}





\title{Auditing AI models for Verified Deployment under Semantic Specifications}
%


\author{%
  Homanga Bharadhwaj$^{1,2}$\thanks{Work done during Homanga's research internship at NVIDIA.  \texttt{hbharadh@cs.cmu.edu}}, %
  De-An Huang$^1$, %
  Chaowei Xiao$^1$, \\ %
  \textbf{Animashree Anandkumar$^{1,3}$, %
  Animesh Garg$^{1,4}$}\\
  $^1$Nvidia, %
  $^2$Carnegie Mellon University, \\%
  $^3$California Institute of Technology, %
  $^4$University of Toronto, Vector Institute
}

\iclrfinalcopy 
\begin{document}

\maketitle

\begin{abstract}
Auditing trained deep learning (DL) models prior to deployment is vital for preventing unintended consequences. One of the biggest challenges in auditing is the lack of human-interpretable specifications for the DL models that are directly useful to the auditor. We address this challenge through a sequence of semantically-aligned unit tests, where each unit test verifies whether a predefined specification (e.g., accuracy over 95\%) is satisfied with respect to controlled and semantically aligned variations in the input space (e.g., in face recognition, the angle relative to the camera). 
We enable such unit tests through variations in a semantically-interpretable latent space of a generative model. Further, we conduct certified training for the DL model through a shared latent space representation with the generative model. 
With evaluations on four different datasets, covering images of chest X-rays, human faces, ImageNet classes, and towers, we show how \algoName allows us to obtain controlled variations for certified training. Thus, our framework, \algoName, bridges the gap between semantically-aligned formal verification and scalability. 
A blog post accompanying the paper is at this link \href{https://developer.nvidia.com/blog/nvidia-research-auditing-ai-models-for-verified-deployment-under-semantic-specifications/}{auditai.github.io}
\end{abstract}


\section{Introduction}

Deep learning (DL) models are now ubiquitously deployed in a number of real-world applications, many of which are safety critical such as autonomous driving and healthcare~\citep{dlav1,dlhealth2,dlhealth3}. 
As these models are prone to failure, especially under domain shifts, 
it is important to know when and how they are likely to fail before their deployment, a process we refer to as auditing. Inspired by the failure-mode and effects analysis (FMEA) for control systems and software systems~\citep{fmea}, we propose to \emph{audit} DL models through a sequence of semantically-aligned \emph{unit tests}, where each unit test verifies whether a pre-defined specification (e.g., accuracy over 95\%) is satisfied with respect to controlled and semantically meaningful variations in the input space (e.g., the angle relative to the camera for a face image). Being semantically-aligned is critical for these unit tests to be useful for the auditor of the system to plan the model's deployment. 

The main challenge for auditing DL models through semantically-aligned unit tests is that the current large-scale DL models mostly lack an interpretable structure. This makes it difficult to quantify how the output varies given controlled semantically-aligned input variations. While there are works that aim to bring interpretable formal verification to DL models~\citep{lomuscio,liu2019algorithms}, the scale is still far from the millions if not billions of parameters used in contemporary models~\citep{resnet,densenet,gpt3}.

On the other hand, auditing has taken the form of verified adversarial robustness for DL models~\citep{ samangouei2018defense,xiao2018characterizing,robustness1}. However, this has mostly focused on adversarial perturbations in the pixel space, for example, through Interval Bound Propagation (IBP), where the output is guaranteed to be invariant to input pixel perturbations with respect to $L_p$ norm~\citep{IBP,crownIBP}. While these approaches are much more scalable to modern DL architectures, the pixel space variations are not semantically-aligned, meaning they do not directly relate to semantic changes in the image, unlike in formal verification. Consider a unit test that verifies against the angle relative to the camera for a face image. A small variation in the angle (e.g., facing directly at the camera versus $5^{\circ}$ to the left) can induce a large variation in the pixel space. Current certified training methods are far from being able to provide guarantees with respect to such large variations in the pixel space with reasonable accuracy.

\begin{figure}[t]
    \centering
    \includegraphics[width=\textwidth]{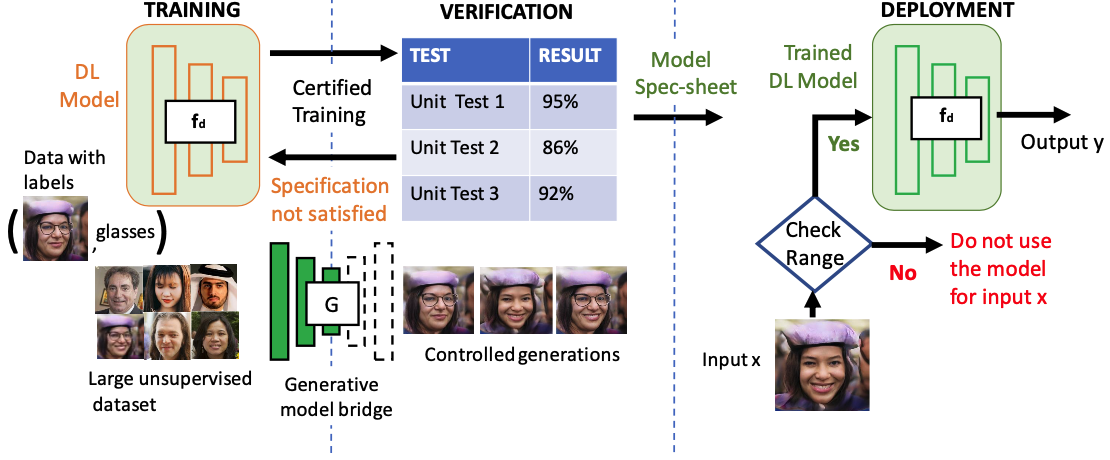}
    \vspace{-5pt}
    \caption{Basic outline of \algoName. The training phase involves the design and training of the algorithms for a particular use case. The auditor lists a set of specifications that the trained model must satisfy. The auditor receives the trained model from the designer and with the set of specifications has to determine whether the model satisfies the specifications. If they are not satisfied, then the model is sent back to the designer for further updates. After that, the model is deployed, with a model spec-sheet detailing the verified range of operation. At deployment, latent embeddings of the input inform whether the input is within model-specifications. Based on the spec-sheet, it can be decided in which settings to use the model. }
    \label{fig:overview}
\end{figure}

\noindent\textbf{Our Approach.} In order to overcome the above limitations, we develop a framework for auditing, \algoName. We consider a typical machine learning production pipeline (Fig.~\ref{fig:overview}) with three stages, the design and training of the model, its verification, and finally deployment. The verification is crucial in determining whether the model satisfies the necessary specifications before deployment.

We address the gap between scalability and interpretability by proposing to verify specifications for variations directly in a semantically-aligned latent space of a generative model.  For example in Fig.~\ref{fig:overview}, unit test 1 verifies whether a given face classification model maintains over $95\%$ accuracy when the face angle is within $d^{\circ}$, while unit test 2 checks under what lighting condition the model has over $86\%$ accuracy. Once the verification is done, the auditor can then use the verified specification to determine whether to use the trained DL model during deployment. 

For semantically-aligned latent variations, we create a bridge between the generative model and the DL model such that they share the same latent space. We incorporate a variant of IBP~\citep{crownIBP} for verification with respect to perturbations in the latent space.  Further this leads to a tighter and a much more practical bound in the output space compared to pixel-based certified training.  We also show that \algoName can verify whether a unit test is satisfied by generating a proof for verification based on bound propagation. Fig.~\ref{fig:overview} gives an overview of our auditing framework and Fig.~\ref{fig:main} elaborates on the generative model bridge.


\noindent\textbf{Summary of Contributions:}
\begin{enumerate}[
    topsep=0pt,
    noitemsep,
    leftmargin=*,
    itemindent=12pt]
\item  We develop a framework, \algoName for auditing deep learning models by creating a bridge with a generative model such that they share the same semantically-aligned latent space. 
\item We propose unit tests as a semantically-aligned way to quantify specifications that can be audited.
\item We show how IBP can be applied to latent space variations, to provide certifications of semantically-aligned specifications.
\end{enumerate}

We show that \algoName is applicable to training, verification, and deployment across diverse datasets: ImageNet~\citep{imagenet}, Chest X-Rays~\citep{chexpert,nihchestxray}, LSUN~\citep{lsun},  and Flicker Faces HQ (FFHQ)~\citep{stylegan}. For ImageNet, we show that \algoName can train verifiably robust models which can tolerate 25\%larger pixel-space variations compared to pixel-based certified-training counterparts for the same overall verified accuracy of 88\%. The variations are measured as $L_2$ distances in the pixel-space. The respective \% increase in pixel-space variations that can be certified for Chest X-Rays, LSUN, and FFHQ are 22\%, 19\%, 24\%. We conclude with a human-study of the quality of the generative model for different ranges of latent-space variations revealing that pixel-space variations upto 62\% of the nominal values result in realistic generated images indistinguishable from real images by humans.

\section{\algoName: A Deep Learning Audit Framework} 

\label{sec:method}
In this section, we describe the details of our framework, \algoName outlined in Fig.~\ref{fig:overview} for verifying and testing deep models before deployment. We propose to use \emph{unit tests} to verify variations of interest that are semantically-aligned.  The advantage of \algoName is that the verified input ranges are given by several semantically aligned specifications for the end-user to follow during deployment. In the following sub-sections, we formally define unit tests, outline the verification approach, and describe the specifics of \algoName with a GAN-bridge shown in Fig.~\ref{fig:main}.


\subsection{Unit Test Definition}

Consider a machine learning model $f:\gX \rightarrow \gY$ that predicts outputs $y\in\gY$ from inputs $x\in\gX$ in dataset $\gD$. Each of our unit test can be formulated as providing guarantees such that: 
\begin{equation}
\label{eqn:unit_test}
    F(x,y)\leq 0 \;\;\; \forall x \;\; s.t. \;\;, e_i(x)\in \gS_{i,in}, y=f(x) 
\end{equation}
Here, $i$ subscripts an unit test, encoder $e_i$ extracts the variation of interest
from the input $x$, and $\gS_{i,in}$ denotes the set of range of variation that this unit test verifies. If the condition is satisfied, then our unit test specifies that the output of the model $f(x)$ would satisfy the constraint given by $F(\cdot, \cdot) \leq 0$. 

For example, $x$ could be an image of a human face, and $f$ could be a classifier for whether the person is wearing eyeglasses ($f \leq 0$ means wearing eyeglasses and $f > 0$ means not wearing eyeglasses). Then $e_i(\cdot)$ could be extracting the angle of the face from the image, and $\gS_{i,in}$ could be the set $\{e_i(x)|\;\; \forall x \;\;|e_i(x)| < 30^{\circ} \}$ constraining the rotation to be smaller than $30^{\circ}$. And $F(x,y) = -f(x)f(x_{0^{\circ}}) \leq 0$ says that our classifier output would not be changed from the corresponding output of $x_{0^{\circ}}$, the face image of $x$ without any rotation.  In this case, when the end-user is going to deploy $f$ on a face image, they can first apply $e_i(\cdot)$ to see if the face angle lies within $\gS_{i,in}$ to decide whether to use the model $f$.

\subsection{Verification Outline} 

Given a specification $F(x, y)\leq0$, we need to next answer the following questions:
\begin{enumerate}[
    topsep=0pt,
    noitemsep,
    leftmargin=*,
    itemindent=12pt]
    \item How do we obtain the corresponding components ($e_i$, $S_{i,in}$) in \eqnref{unit_test}?
    \item What can we do to guarantee that  $F(x, y) \leq 0$ is indeed satisfied in this case?
\end{enumerate}

\begin{wrapfigure}[17]{r}{0.65\textwidth}
\vspace*{-0.5cm}
    \centering
    \includegraphics[width=0.85\linewidth]{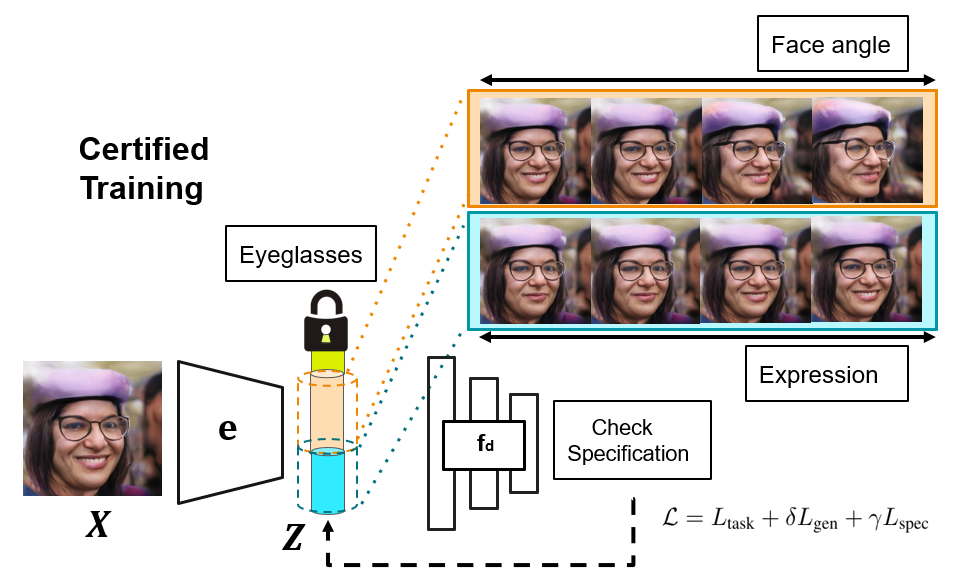}
    \caption{ Instead of variations at the level of input (e.g. pixels), consider variations in the latent space. Given a specification, we use a variant of interval-bound propagation (IBP) to verify if the specification is satisfied for $\epsilon$-ball latent space variations for particular latent dimensions.
}
    \label{fig:main}
\end{wrapfigure}
For the sake of illustration, we first consider a scenario with a less realistic assumption. We will then discuss how we can relax this assumption. In this scenario, we assume that we are already given the encoder $e_i$ and a corresponding generator $g_i$ that inverts $e_i$. Continuing the previous example, $e_i$ can extract the face angle of a given face image. On the other hand, $g_i(x_{0^{\circ}}, d^{\circ})$ would be able to synthesize arbitrary face image $x_{d^{\circ}}$ that is the same as $x_{0^{\circ}}$ except being rotated by $d^{\circ}$.

Given the generator $g_i$ and the encoder $e_i$, we propose to obtain $\gS_{i,in}$ building on interval bound propagation (IBP)~\citep{IBP}. We include a treatment of IBP preliminaries in Appendix~\ref{sec:prelimIBP}. Here, $\gS_{i,in}$ is the largest set of variations such that the specification $ F(x,y)\leq 0 $ is satisfied. Given a set of variations $\gS_{i,in}$, IBP-based methods can be used to obtain a bound on the output of the network $\gS_{i,out}=\{ y: l_y \leq y \leq u_y \}$ and it can be checked whether the specification $F(x,y)\leq 0$ is satisfied for all values in $\gS_{i,out}$. We can start with a initial set $\gS^0_{i,in}$ and apply $g_i$ to this set. Then, we can first find out what would be the corresponding convex bound of variations of $\gS^0_{i,in}$ in the image space $\gX$. We can subsequently propagate this bound in $\gX$ through $f$ to $\gY$ to get $\gS^0_{i,out}$ and check whether $F(x,y)\leq 0$ is satisfied. By iteratively searching for the largest set $\gS^j_{i,in}$ such that $F(x,y)\leq 0$ is still satisfied by $\gS^j_{i,out}$, we can obtain $\gS_{i,in}$ given $F$. 

\vspace*{-0.2cm}
\subsection{Certified Training through Latent Representation}
\vspace*{-0.2cm}

In the previous section, we are able to find the set $\gS_{i,in}$ such that \eqnref{unit_test} is satisfied given specification $F$, encoder $e_i$, generator $g_i$. However, there are several challenges that limit the practical application of this scenario. First and foremost, while significant progress has been made in generative models, especially controlled generation~\citep{biggan,PGGAN}, the assumption of having $g_i$ apriori is still unrealistic. In addition, since the propagated bound is convex, one can imagine that the bound in $\gX$ (high dimensional image-space) would be much larger than needed. This leads to a much narrower estimate of $\gS_{i,in}$. While it could be true that \eqnref{unit_test} is satisfied, we still need a non-trivial size of $\gS_{i,in}$ for the framework to be useful. For example, if $\gS_{i,in}$ constrains face angle rotations to be within $1^{\circ}$ then it may not be practical for the end-user.

For auditing ML models thoroughly, we require clear separation between the development and testing phases (i.e. black-box verification). However, it might also be desirable in practice to train the model in such a way that the unit tests are likely to be satisfied in the first place. In this case, we do have a tighter connection between the designer and the verifier. We show that by relaxing the black-box constraint, we can simultaneously address the previously listed challenges in a white-box setting.



Now that we have access to the internal working and design of the ML model $f$, we propose to bypass the image generation step in our verification process through the use of latent disentangled representations. More specifically, consider a ML model mapping $f:\gX \rightarrow \gZ \rightarrow \gY$ (denote by $f_d: \gZ \rightarrow \gY$ the downstream task) and a generative model mapping $g:\gX \rightarrow \gZ \rightarrow \hat{\gX}$, such that $f$ and $g$ share a common latent space $\gZ$ by virtue of having a common encoder $e(\cdot)$ (Fig.~\ref{fig:main}). 

Since $\gS_{i,in}$ is a subset of the range of $e$, we have $\gS_{i,in} \subseteq \gZ$. This further implies that we do not have to apply IBP through $g$ and the whole $f$. Instead we only have to apply it through $f_d$, which does not involve the expansion to the pixel space $\gX$. We show in the experiment (Section~\ref{sec:experiments}) that this is important to have a practically useful $\gS_{i,in}$. In addition, \algoName also alleviates the need of having a perfect generative model, as long as we can learn a latent space $\gZ$ through an inference mechanism.




\noindent\textbf{Learning the latent space.} There are three requirements for the latent space. First, it should be \emph{semantically-aligned} for us to specify unit tests in this space. A good example is \emph{disentangled} representation~\citep{disentangled1,disentangled2}, where each dimension of the latent code would correspond to a semantically aligned variation of the original image. As shown in  \figref{main}, a latent dimension could correspond to the pose variation, and we can select $e_i$ to be the value of this dimension, and $\gS_{i,in}$ as the proper range that the model would be verified for. Second, our model should be \emph{verifiable}, where the latent space is learned such that the corresponding $\gS_{i,in}$ is as large as possible given a specification function $F$. In this case, \algoName can be seen as doing \emph{certified training}~\citep{crownIBP} to improve the verifiability of the model. Finally, the latent space should be able to perform well on the downstream task through $f_d$. Combining these three criteria, our full training criteria: $ \gL = L_{\text{task}} + \gamma L_{\text{spec}}  + \delta L_{\text{gen}} $
combines three losses $L_{\text{gen}}$, $L_{\text{spec}}$, and $L_{\text{task}}$ that would encourage interpretability, verifiability, and task performance repectively. We explain each of the losses below:

\textbf{Task Performance ($L_{\text{task}}$).} Let, $\text{CE}(\cdot)$ denote the standard cross-entropy loss. $L_{\text{task}}$ captures the overall accuracy of the downstream task and can be expressed as  $L_{\text{task}} = \text{CE}(z_K,y_\text{true})$.

\textbf{Verifiability ($L_{\text{spec}}$).} Let $f_d$ be a feedforward model (can have fully connected / convolutional / residual layers) with $K$ layers $z_{k+1} = h_k(W_kz_k + b_k), \;\;\; k = 0,..., K-1$,
where, $z_0=e(x)$, $z_K$ denotes the output logits, and $h_k$ is the activation function for the $k^{\text{th}}$ layer. The set of variations $\gS_{i,in}$ could be such that the $l_p$ norm for the $i^{\text{th}}$ latent dimension is bounded by $\epsilon$, i.e. $    \gS_{i,in} = \{ z: || z_{i} - z_{0,i} ||_p \leq \epsilon \}$. We can bound the output of the network  $\gS_{out}=\{ z_K: l_K \leq z_K \leq u_K \}$ through a variant of interval bound propagation (IBP). Let the specification be $F(z,y) = c^Tz_K+d \leq 0,\;\;\; \forall z\in \gS_{i,in}, z_K=f_d(z) $.

For a classification task, this specification implies that the output logits $z_K$ should satisfy a linear relationship for each latent space variation such that the classification outcome $\argmax_iz_{K,i}$ remains equal to the true outcome $y_{\text{true}}$ corresponding to $z_{0}$.
To verify the specification, IBP based methods search for a counter-example, which in our framework amounts to solving the following optimization problem, and checking whether the optimal value is $\leq 0$. 
\begin{equation}
    \max_{z\in \gS_{in}} F(z,y) = c^Tz_K+d \;\;\;s.t. \;\;  z_{k+1} = h_k(W_kz_k + b_k) \;\;\; k = 0,..., K-1
\end{equation}
By following the IBP equations for bound-propagation, we can obtain upper and lower bounds $[\underline{z}_K,\overline{z}_K]$ which can be used to compute the worst case logit difference $\overline{z}_{K,y} - \underline{z}_{K,y_{\text{true}}}$ between the true class $y_{\text{true}}$ and any other class $y$. We define $\hat{z}_K = \overline{z}_{K,y}$ (if $y\neq y_\text{true}$); $\hat{z}_K = \underline{z}_{K,y_{\text{true}}}$ (otherwise). Then, we can define $L_{\text{spec}}$ as $L_{\text{spec}} = \text{CE}(\hat{z}_K,y_\text{true})$. In practice, we do not use IBP, but an improved approach CROWN-IBP~\citep{crownIBP} that provides tigher bounds and more stable training, based on the AutoLiRPA library~\citep{autolirpa}. We mention details about the specifics of this in the Appendix.

\textbf{Interpretability ($L_{\text{gen}}$ ).} Finally, we have a loss term for the generative model, which could be a VAE, a GAN, or variants of these. If the model is a GAN, then we would have an optimization objective for GAN inversion.  Let's call the overall loss of this model $L_{\text{gen}}$. Note that $L_{\text{gen}}$ could also encapsulate other loss objectives for more semantically-aligned disentanglement. Typically we would train this generative model on a large amount of unlabelled data that is not available for training the classifier we are verifying. We discuss specifics about this for each experiment in the Appendix.

\vspace*{-0.2cm}

\subsection{Deployment}
\vspace*{-0.2cm}
Based on the perturbation radius $\epsilon$ and the nominal $z_0$, we have different ranges $[z^n_0-\mathds{1}_i\epsilon, z^n_0+\mathds{1}_i\epsilon]$ for the verified error at the end of training the classifier. Here, $z^n_0 = e(x^n) \;\;\; \forall \;\;n=1,...N$ in the dataset $\gD_\text{train}$ and $\mathds{1}_i = 1$ for the $i^{\text{th}}$ dimension (corresponding to the $i^{\text{th}}$ unit test) and is $0$ otherwise. We can convert this per-sample bound to a global bound for the $i^{\text{th}}$ latent code, $[\overline{z}_i, \underline{z}_i]$ by considering an aggregate statistic of all the ranges, for example 
$$[\overline{z}_i, \underline{z}_i] = [\mu(\{z^n_{0,i}-\mathds{1}_i\epsilon\}_{n=1}^N) - \sigma(\{z^n_{0,i}-\mathds{1}_i\epsilon\}_{n=1}^N),\mu(\{z^n_{0,i}+\mathds{1}_i\epsilon\}_{n=1}^N) + \sigma(\{z^n_{0,i}+\mathds{1}_i\epsilon\}_{n=1}^N ) ]$$
Here, $\mu(\cdot)$ denotes the mean and $\sigma(\cdot)$ denotes the standard deviation. Note that this is a global bound on the range of variations for only the latent dimension corresponding to the unit test, and does not concern the other dimensions. During deployment, the end-user receives the model spec-sheet containing values of $ [\overline{z}_i, \underline{z}_i]$ for each unit test $i$ and results with different values of perturbation radius $\epsilon$. Now, if the end-user wants to deploy the model on an evaluation input $x_\text{eval}$, they can encode it to the latent $\gZ$ space to get code $z_\text{eval}$ and check whether $z_{\text{eval},i}\in [\overline{z}_{0,i}, \underline{z}_{0,i}]$ in order to decide whether to deploy the model for this evaluation input. 


\vspace*{-0.2cm}
\section{Theoretical Results}
\vspace*{-0.2cm}
We show that although \algoName considers perturbations in the latent space that allows a large range of semantic variations in the pixel-space, it retains the theoretical guarantees of provable verification analogous to IBP-based adversarial robustness~\citep{crownIBP,autolirpa} that consider pixel-perturbations.
\begin{theorem}[\textbf{Verification.} The auditor can verify whether the trained model from the designer satisfies the specifications, by generating a proof of the same] 
\label{th:verification}
Let $x\in\gX$ be the input, $z_0=e(x)$ be the latent code, and $ \gS_{i,in} = \{ z: || z_{i} - z_{0,i} ||_p \leq \epsilon \} $ be the set of latent perturbations. Then, the auditor is guaranteed to be able to generate a proof for verifying whether the linear specification on output logits $c^Tz_K+d \leq 0\;\;\; \forall z\in \gS_{i,in}, z_K=f_d(z)$ (corresponding to unit test $i$) is satisfied.  
\end{theorem}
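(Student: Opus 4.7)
The plan is to construct the proof \emph{constructively}, by exhibiting an explicit bound-propagation procedure that runs in finite time on $f_d$ and outputs either a certificate of the specification or the string ``unverified.'' Because $f_d$ is a finite-depth feedforward network with known weights, the natural route is to push an axis-aligned box containing $\gS_{i,in}$ through $f_d$ layer-by-layer using IBP (in practice CROWN-IBP), and then close the argument with a closed-form maximisation of the linear specification over the resulting output box.

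First, I would observe that $\gS_{i,in}$ is contained in an axis-aligned box in $\gZ$: since only coordinate $i$ is perturbed with $\lVert z_i - z_{0,i}\rVert_p \leq \epsilon$ while every other coordinate is pinned to $z_{0,j}$, one can take $\underline{z}^{(0)} = z_0 - \epsilon\mathds{1}_i$ and $\overline{z}^{(0)} = z_0 + \epsilon\mathds{1}_i$, so that $\gS_{i,in} \subseteq [\underline{z}^{(0)},\overline{z}^{(0)}]$. Next, I would propagate this box through the layers $z_{k+1} = h_k(W_k z_k + b_k)$. For each affine step, the tight box bounds are given in closed form by splitting $W_k$ into its positive and negative parts, $\underline{a}_k = W_k^{+}\underline{z}^{(k)} + W_k^{-}\overline{z}^{(k)} + b_k$ and $\overline{a}_k = W_k^{+}\overline{z}^{(k)} + W_k^{-}\underline{z}^{(k)} + b_k$; for monotone activations (ReLU, sigmoid, tanh, \ldots) one then applies $h_k$ componentwise. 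Induction on $k$ yields $\underline{z}^{(K)} \leq z_K \leq \overline{z}^{(K)}$ componentwise for \emph{every} $z\in\gS_{i,in}$.

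With the output box in hand, the linear specification check reduces to a one-line computation: the quantity $\max_{z_K\in[\underline{z}^{(K)},\overline{z}^{(K)}]} c^{T} z_K + d$ equals $c^{+}\!\cdot\overline{z}^{(K)} + c^{-}\!\cdot\underline{z}^{(K)} + d$, and the auditor certifies $F\leq 0$ on $\gS_{i,in}$ precisely when this scalar is $\leq 0$. Since all intermediate computations are finite-dimensional linear-algebra operations on the model's weights, the auditor can record $(\underline{z}^{(k)},\overline{z}^{(k)})_{k=0}^{K}$ together with the closing inequality; this trace \emph{is} the proof, and its soundness follows by the same induction used to establish the bound.

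The main obstacle, which I would flag explicitly rather than try to circumvent, is that the theorem promises only the ability to \emph{produce a proof}, not that every valid specification can be certified. IBP is sound but not complete, so the procedure above may return ``unverified'' on specifications that actually hold; the guarantee of the theorem is that the proof procedure always terminates with a sound verdict, and it is this weaker (but sufficient) claim that the argument above establishes. A secondary care point is to confirm that the activations used in $f_d$ are either monotone or piecewise-linear so that the layerwise box-propagation step admits a closed form — for standard architectures (and the CROWN-IBP variant used in the paper) this is immediate, so no further assumption is required.
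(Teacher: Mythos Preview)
Your proposal is correct and takes essentially the same approach as the paper: both construct the proof by IBP-style layerwise bound propagation through $f_d$, then maximise the linear specification over the resulting output box and note that the procedure is sound but not complete. The only cosmetic difference is that the paper handles the general $\ell_p$ perturbation at the first layer via H\"older's inequality (and splits perturbed vs.\ unperturbed coordinates explicitly) whereas you enclose $\gS_{i,in}$ in an axis-aligned box up front; since $\lVert\cdot\rVert_\infty\le\lVert\cdot\rVert_p$ this is still sound, and for a single perturbed coordinate (as your $\mathds{1}_i$ notation suggests) the two coincide exactly.
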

\begin{proof}
Here we provide a proof sketch and mention details in the Appendix~\ref{sec:proof}. The verifier can generate a proof by searching for a worst-case violation   $\max_{z\in \gS_{i,in}} c^Tz_K+d \;\;\;s.t. \;\;  z_{k+1} = h_k(W_kz_k + b_k) \;\;\; k = 0,..., K-1$. If the optimal value of this $<0$, then the specification is satisfied. The verifier applies Interval-bound Propagation (IBP) to obtain an upper bound $\overline{z}_K$ on the solution of this optimization problem, and checks whether the upper bound $\overline{z}_K< 0$.
\end{proof}

\section{Experiments}
\vspace*{-0.3cm}
\label{sec:experiments}
We perform experiments with four datasets in different application domains. We choose the datasets ImageNet, LSUN, FFHQ, and CheXpert where current state of the art generative models have shown strong generation results~\citep{stylegan,PGGAN,biggan}.  We show an illustration of what the generated images under latent perturbations look like in Fig~\ref{fig:qualitative}. Through experiments, we aim to understand the following:
\begin{enumerate}[
    topsep=0pt,
    noitemsep,
    leftmargin=*,
    itemindent=12pt]
    \item How do the verified error rates vary with the magnitude of latent perturbations?
    \item How does \algoName compare against pixel-perturbation based verification approaches?
    \item How does \algoName perform during deployment under domain shift?
\end{enumerate}


\subsection{ImageNet} 
\vspace*{-0.2cm}
\label{sec:imagenet}

\begin{wrapfigure}[17]{r}{0.6\textwidth}
    \centering
    \vspace{-30pt}
    \includegraphics[width=\linewidth]{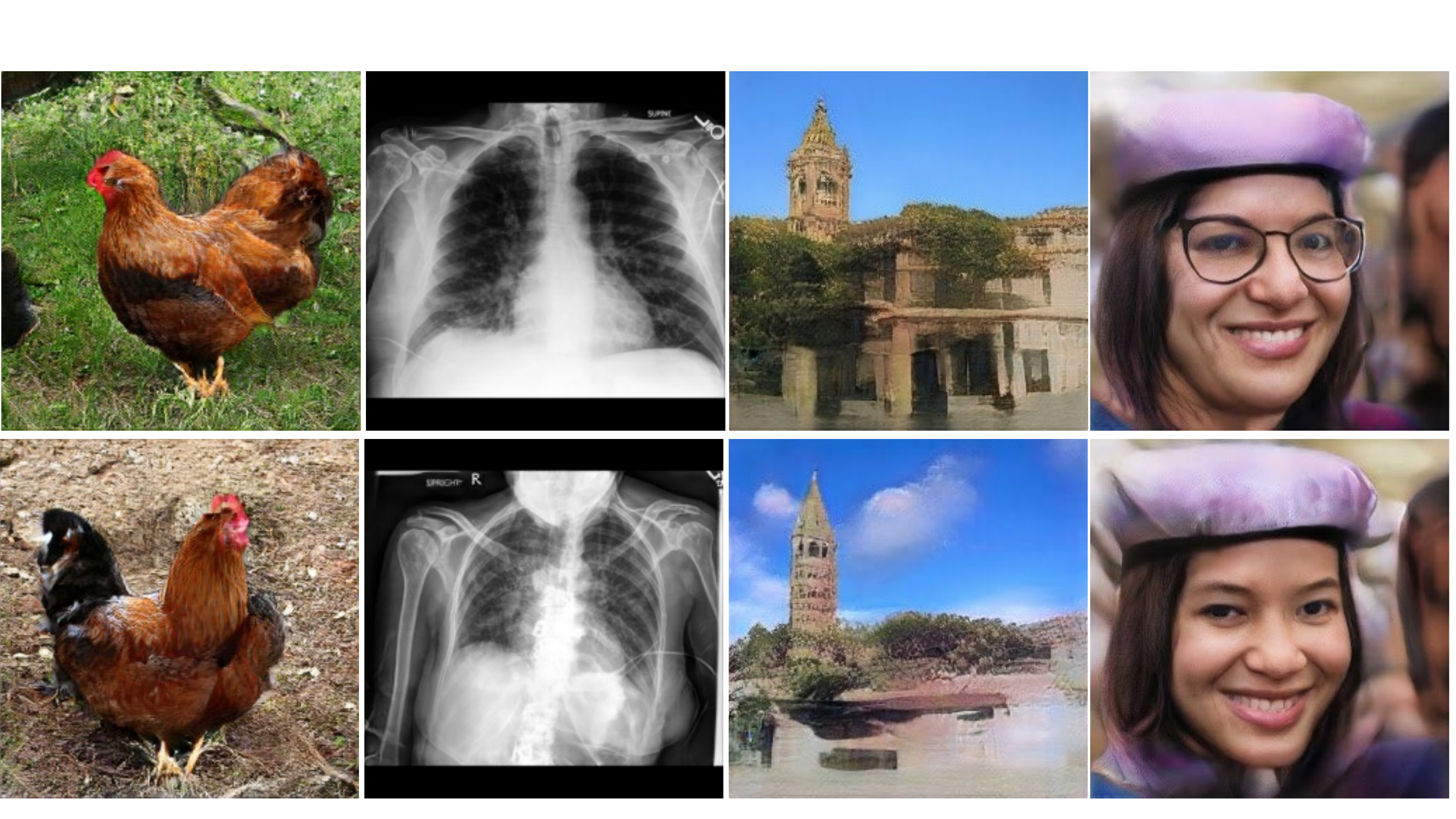}
    \caption{ Generated samples from latent space manipulation of features on the ImageNet, CheXpert, LSUN Tower, and the FFHQ datasets. We respectively show two images of a hen looking left and right, chest X-ray images with and without penumonia, Towers with and without vegetation and faces, with and without glasses. These are examples of features that can be controlled through latent space manipulations.}
    \label{fig:qualitative}
\end{wrapfigure}

ImageNet contains 1000 different classes of objects. We consider a BigBiGAN model~\citep{bigbigan} as the base generative model such that the latent space is class conditional, and we can obtain latent codes corresponding to specific classes. So, latent perturbations correspond to variations in different images within the same class. The specification for verification is that the classifier outputs the correct class for all latent perturbations $\epsilon$. This is an unit test because we can verify with respect to each class separately. Table~\ref{tb:imagenet} shows results for different $\epsilon$ on the test set. The results confirm the intuition that increasing $\epsilon$ increases the verified error due to wider latent perturbations.

In Table~\ref{tb:imagenetcomparison}, we compare a pixel-based robust training approach with our framework, both trained with the same underlying algorithm CROWN-IBP~\citep{crownIBP}. For the same verified test error, we have the corresponding latent perturbation for \algoName (call it $\epsilon_1$) and the pixel perturbation for the pixel-based approach (call it $\epsilon_2$). Based on $\epsilon_2$ we compute the bound for the intermediate latent node, $\epsilon_{21}$. Since comparing $\epsilon_{1}$ and $\epsilon_{21}$ directly may not be meaningful due to different nominal values of the latent code, we consider the fraction $\epsilon/z_{nom}$ and plot the resulting values in Table~\ref{tb:imagenetcomparison}. We observe that  $\epsilon_1/z_{nom_1} >> \epsilon_{21}/z_{nom_{21}}$ indicating tolerance to much larger latent variations. Since a wider latent variation corresponds to a wider set of generated images, \algoName effectively verifies over a larger range of image variations, compared to perturbations at the level of pixels. In addition, when we translate the latent perturbations of \algoName to the pixel space of the generative model, we obtain on average $20\%$, and $17\%$ larger variations measured as $L_2$ distances compared to the pixel-based approach for a verified error of 0.88 and 0.50 respectively. 


\setlength{\tabcolsep}{3.0pt}
\begin{table}[h!]
\centering
\caption{Verified Error fraction on a subset (100 classes) of the \textbf{ImageNet} dataset. Results are on the held-out test set of the dataset and correspond to training the classifier with the same $\epsilon$ during training. S.D. is over four random seeds of training. \textbf{100 classes} is the average value of 100 unit tests over all the 100 classes while \textbf{min class} and \textbf{max class} respectively correspond to the minimum and maximum verified errors for unit tests over each class. Alongside each $\epsilon$ we represent the $\%$ with respect to the nominal value it corresponds to (the nominal value of $\epsilon$ is 0.02, so $\epsilon=0.005$ is 25\% of 2.0). Lower is better.}
\begin{tabular}{@{}cccccc@{}}
\toprule
\textbf{ImageNet}    & $\mathbf{\epsilon=0.005 (25\%)}$ & $\mathbf{0.01 (50\%)}$ & $\mathbf{0.015 (75\%)}$ & $\mathbf{0.02 (100\%)}$ & $\mathbf{0.05 (200\%)}$ \\ \midrule
\textbf{100 classes} & 0.20$\pm$0.05             & 0.51$\pm$0.06                     & 0.57$\pm$0.05                      & 0.89$\pm$0.04                     & 0.94$\pm$0.05            \\
\textbf{min class}   & 0.07$\pm$0.06             & 0.33$\pm$0.02                     & 0.41$\pm$0.05                      & 0.58$\pm$0.05                     & 0.86$\pm$0.07            \\
\textbf{max class}   & 0.41$\pm$0.04             & 0.68$\pm$0.04                     & 0.73$\pm$0.06                      & 0.93$\pm$0.05                     & 0.99$\pm$0.05            \\ 
\bottomrule
\end{tabular}
\label{tb:imagenet}
\end{table}

\begin{table}[h!]
\centering
\caption{Comparison of pixel-based variation for CROWN-IBP and \algoName on 100 classes of the \textbf{ImageNet} dataset. We tabulate the values below relative to the nominal values of the corresponding latent codes (i.e. $\epsilon/z_{nom}$) and observe that   $\epsilon_1/z_{nom_1} >> \epsilon_{21}/z_{nom_{21}}$. As a wider latent perturbation corresponds to a wider set of generated images, \algoName effectively verifies over a larger range of image variations. Higher is better.}
\begin{tabular}{@{}ccccc@{}}
\toprule
\textbf{Verified Error}      & \multicolumn{2}{c}{\textbf{0.88}}   & \multicolumn{2}{c}{\textbf{0.50}}   \\ \midrule
\textbf{Method}              & \textbf{Pixel-based  $\epsilon_{21}$} & \textbf{\algoName  $\epsilon_1$} | & \textbf{Pixel-based  $\epsilon_{21}$} & \textbf{\algoName  $\epsilon_1$} \\ \midrule
\textbf{\% Latent variation} & 0.21\%               & 20\%         & 0.14\%               & 12\%         \\ \bottomrule
\end{tabular}
\label{tb:imagenetcomparison}
\end{table}

\subsection{CheXpert} 
\label{sec:chexpert}
CheXpert~\citep{chexpert} contains 200,000 chest radiograph images with 14 labeled observations corresponding to conditions like pneumonia, fracture, edema etc. We consider a PGGAN~\citep{PGGAN} as the base generative model such that the latent space is class-conditional. We consider three separate binary classification tasks with respect to the presence or absence of three conditions: pneumonia, fracture, and edema. We consider the entire dataset for training the generative model, but for the classifiers we only use positive (presence of the condition) and negative (absence of the condition) data with an 80-20 train/test split.  Table~\ref{tb:chexpert} shows results for different $\epsilon$ on the test set for each classifier.

As a sanity check, in order to determine whether the generative model behaves correctly with respect to class conditional generation, we consider an off-the-shelf classifier for pneumonia detection trained on real images of pneumonia patients~\citep{chexnet}. We randomly sample 1000 images from the generative model for each value of $\epsilon =0.5, 1.0, 1.5, 2.0$ and evaluate the accuracy of the  off-the-shelf classifier on these generated images. We obtain respective accuracies $85\%,84\%,81\%,75\%$ indicating highly reliable generated images.

\setlength{\tabcolsep}{6.0pt}
\begin{table}[t]
\centering 
\caption{\textbf{Domain shift study}. Evaluation on the \textbf{NIH Chest X-Rays} dataset, after being trained on the CheXpert dataset. We perform certified training of the models with $\epsilon=1.0$ on CheXpert and deploy it on 5000 images of a different dataset to quantify evaluation performance under potential domain shift. We tabulate the values of precision, recall, and overall accuracy. Higher is better.}
\begin{tabular}{ccccccc}
\toprule
\textbf{Method}               & \multicolumn{3}{c}{Pixel-based}              & \multicolumn{3}{c}{\algoName}             \\ \hline
\textbf{Metric}              & \textbf{Precision} & \textbf{Recall} & \textbf{Accuracy} |  & \textbf{Precision} & \textbf{Recall} & \textbf{Accuracy} \\ \hline
\textbf{Pneumonia}       & 0.78$\pm$0.03                    & 0.74$\pm$0.04                                & 0.80$\pm$0.03                                & 0.83$\pm$0.04         & 0.78$\pm$0.05         &      0.84$\pm$0.05            \\
\textbf{Edema}           & 0.79$\pm$0.05                    & 0.75$\pm$0.03                                & 0.81$\pm$0.04                                & 0.84$\pm$0.04         & 0.79$\pm$0.03         &      0.86$\pm$0.04                     \\
\bottomrule
\end{tabular}
\label{tb:domainshift_nih}
\end{table}

\noindent\textbf{Deployment under domain-shift.} In order to better understand the process of deploying the trained model on a dataset different from the one used in training, we consider the dataset NIH Chest X-rays~\citep{nihchestxray}. This dataset contains (image, label) pairs for different anomalies including pneumonia and edema. We perform certified training on CheXpert for different values of $\epsilon$ (results of this are in Table~\ref{tb:chexpert}) and deploy the trained model to the end-user. The end-user samples 5000 images from the NIH Chest X-rays dataset, encodes them to the latent space with trained encoder $e(\cdot)$ and uses the trained downstream task classifier $f_d(\cdot)$ to predict the anomaly in the image. In Table~\ref{tb:domainshift_nih}, we list the precision, recall, and overall accuracy among the 5000 images for two conditions, presence of pneumonia and edema. We observe that the results for \algoName are on average $5\%$ higher than the pixel-based approach. This intuitively makes sense because from Table~\ref{tb:imagenetcomparison} and section~\ref{sec:imagenet}, we see that \algoName covers about 20\% larger variations in the pixel space and so the odds of the images at deployment lying in the verified range are higher for \algoName. 


\setlength{\tabcolsep}{3.0pt}
\begin{table}[h!]
\centering 
\caption{Verified Error fraction on the \textbf{CheXpert} dataset. Results are on the held-out test set of the dataset and correspond to training the classifier with the same $\epsilon$ during training. S.D. is over four random seeds of training. Each row is an unit test as it corresponds to verifying a classifier for a particular condition. Alongside each $\epsilon$ we represent the $\%$ with respect to the nominal value it corresponds to (the nominal value of $\epsilon$ is 2.0, so $\epsilon=0.5$ is 25\% of 2.0). Lower is better.}
\begin{tabular}{@{}cccccc@{}}
\toprule
\textbf{Unit Test}  & $\mathbf{\epsilon=0.5(25\%)}$ & $\mathbf{\epsilon=1.0 (50\%)}$ & $\mathbf{\epsilon=1.5 (75\%)}$ & $\mathbf{\epsilon=2.0 (100\%)}$ & $\mathbf{\epsilon=4.0  (200\%)}$ 
\\ \midrule

\textbf{Pneumonia}       & 0.02$\pm$0.01                    & 0.12$\pm$0.03                                & 0.24$\pm$0.05                                & 0.43$\pm$0.04           & 0.51$\pm$0.05                         \\
\textbf{Edema}           & 0.03$\pm$0.03                    & 0.14$\pm$0.04                                & 0.22$\pm$0.04                                & 0.41$\pm$0.03           & 0.55$\pm$0.06                         \\
\textbf{Fracture}        & 0.04$\pm$0.02                    & 0.14$\pm$0.03                                & 0.22$\pm$0.05                                & 0.42$\pm$0.02           & 0.52$\pm$0.03                         \\ \bottomrule
\end{tabular}
\label{tb:chexpert}
\end{table}

\begin{table}[t]
\centering
\caption{Verified Error fraction on the \textbf{FFHQ} and \textbf{LSUN} datasets. For FFHQ, the task is to classify whether eyeglasses are present, and the unit tests correspond to variations with respect to pose and expression of the face. For LSUN Tower, the task is to classify whether green vegetation is present on the tower, and the unit tests correspond to variations with respect to clouds in the sky and brightness of the scene. Results are on the held-out test set of the dataset and correspond to training the classifier with the same $\epsilon$ during training. S.D. is over four random seeds of training. Lower is better.}
\setlength{\tabcolsep}{3.0pt}
\begin{tabular}{@{}cccccc@{}}
\toprule
\textbf{FFHQ}            & $\mathbf{\epsilon=0.5(25\%)}$ & $\mathbf{\epsilon=1.0 (50\%)}$ & $\mathbf{\epsilon=1.5 (75\%)}$ & $\mathbf{\epsilon=2.0 (100\%)}$ & $\mathbf{\epsilon=3.0 (150\%)}$  \\ \midrule
\textbf{Pose test}       & 0.01$\pm$0.02           & 0.20$\pm$0.05                    & 0.37$\pm$0.03                    & 0.48$\pm$0.03                    & 0.53$\pm$0.06           \\
\textbf{Expression test} & 0.01$\pm$0.02           & 0.13$\pm$0.04                    & 0.29$\pm$0.04                    & 0.58$\pm$0.01                    & 0.56$\pm$0.03           \\ \midrule
\textbf{LSUN Tower}      & $\mathbf{\epsilon=0.5(25\%)}$ & $\mathbf{\epsilon=1.0 (50\%)}$ & $\mathbf{\epsilon=1.5 (75\%)}$ & $\mathbf{\epsilon=2.0 (100\%)}$ & $\mathbf{\epsilon=3.0 (150\%)}$ \\ \midrule
\textbf{Cloud test}       & 0.02$\pm$0.03           & 0.14$\pm$0.02                    & 0.32$\pm$0.05                    & 0.44$\pm$0.06                    & 0.51$\pm$0.03          \\
\textbf{Brightness test} & 0.01$\pm$0.01           & 0.18$\pm$0.03                    & 0.35$\pm$0.05                   & 0.49$\pm$0.04                    & 0.54$\pm$0.03         
\\ 
\bottomrule
\end{tabular}
\label{tb:ffhqlsun}
\end{table}



\vspace*{-0.2cm}
\subsection{FFHQ} 
\vspace*{-0.2cm}
Flicker Faces HQ~\citep{stylegan} contains 70,000 unlabeled images of normal people (and not just celebrities as opposed to the CelebA~\citep{celeba} dataset). We train a StyleGAN model~\citep{stylegan} on this dataset and use existing techniques~\citep{manipulation1,manipulation2,manipulation3} to identify directions of semantic change in the latent space. We identify latent codes (set of dimensions of $\gZ$) corresponding to \texttt{expression (smile)}, \texttt{pose}, and \texttt{presence of eyeglasses}. We define the task for the classifier to be detection of whether eyeglasses are present and define unit tests to be variations in the latent code with respect to expression and pose (while keeping the code for eyeglasses invariant). Table~\ref{tb:ffhqlsun} shows the results of these unit tests corresponding to different latent perturbations $\epsilon$. We see that the verified error is upto 20\% for over 50\% larger variations compared to nominal values. Based on this spec-sheet table, during deployment, the end-user can check whether the encoding of input image lies in the desired accuracy ranges for pose and expression, and then decide whether to use the model.

\vspace*{-0.2cm}
\subsection{LSUN Tower} The LSUN Tower dataset~\citep{lsun} contains 50,000 images of towers in multiple locations. Similar to the FFHQ setup above, we train a StyleGAN model on this dataset and use existing techniques~\citep{manipulation2,inversion} to identify directions of semantic change in the latent space. We identify latent codes corresponding to \texttt{clouds}, \texttt{vegetation}, and \texttt{sunlight/brightness}. We define the task for the classifier to be detection of whether vegetation is present and define unit tests to be variations in the latent code with respect to how cloudy the sky is, and how bright the scene is (while keeping the code for vegetation invariant).  Table~\ref{tb:ffhqlsun} shows the results of these unit tests corresponding to different latent perturbations $\epsilon$. We see that the verified error is upto 18\% for over 50\% larger variations compared to nominal values. Based on this spec-sheet table, during deployment, the end-user can check whether the encoding of input image lies in the desired accuracy ranges for clouds in the sky and brightness of the scene, and then decide whether to use the model. 

\subsection{Attacking AuditAI certified images in the pixel-space}
In this section, we analyze what fraction of images certified by AuditAI can be attacked in the pixel-space through adversarial corruptions. We create test examples with varying brightness of the scene, corresponding to the setting in Table~\ref{tb:ffhqlsun}. These are real variations directly by modifying pixels to changing the brightness of the scene . We evaluate AuditAI trained the same way as in Table~\ref{tb:ffhqlsun}, but evaluated on this test set after certified training. For $\epsilon=0.5$, $96$ out of $1000$ images (only 9.6\%) certified by AuditAI can be attacked in the pixel-space, while for $\epsilon=1.5$, $85$ out of $1000$ images (only 8.5\%) certified by AuditAI can be attacked. We use CROWN-IBP~\citep{crownIBP} to determine if an image is attackable in the pixel-space. These results show that the certification provided by AuditAI is robust, and although verification is done in the latent space of a generative model, the results hold under real pixel-perturbations as well.


\section{Human Study: How realistic are the generated images?}

We employ Amazon Mechanical Turk~\citep{amt} to better understand the generative model trained for the Chest X-ray dataset and determine the limits of latent space variations for which generations are \textit{realistic}. This study is to determine the maximum value of $\epsilon$ for which generations are realistic to the human eye, such that we verify and report in the spec-sheet values of $\epsilon$ only upto the range of realistic generations. We note that this study is to understand the generative model and not the trained classifier. We employ a similar experimental protocol as~\citep{mturk1,mturk2} and mention details in the Appendix~\ref{sec:mturkappendix}.  We generated 2000 images by sampling latent codes within each $\epsilon$ range of variation. Alongside each $\epsilon$ we represent the $\%$ with respect to the nominal value it corresponds to (the nominal value of $\epsilon$ is 2.0, so $\epsilon=0.5$ is 25\% of 2.0) in Fig~\ref{tb:mturk}. We see that the humans who participated in our study had close to $50\%$ correct guesses for $\epsilon$ upto $1.0$, indicating generations indistinguishable from real images. For $\epsilon\geq 4.0$ the correct guess rate is above $70\%$ indicating that the generations in these ranges of latent variations are not realistic, and so the verified accuracy for these ranges should not be included in the spec-sheet.

\begin{wrapfigure}[18]{r}{0.51\textwidth}
\vspace*{-1.5cm}
    \centering
    \includegraphics[trim={1mm 1mm 1mm 1mm},clip,width=0.51\textwidth]{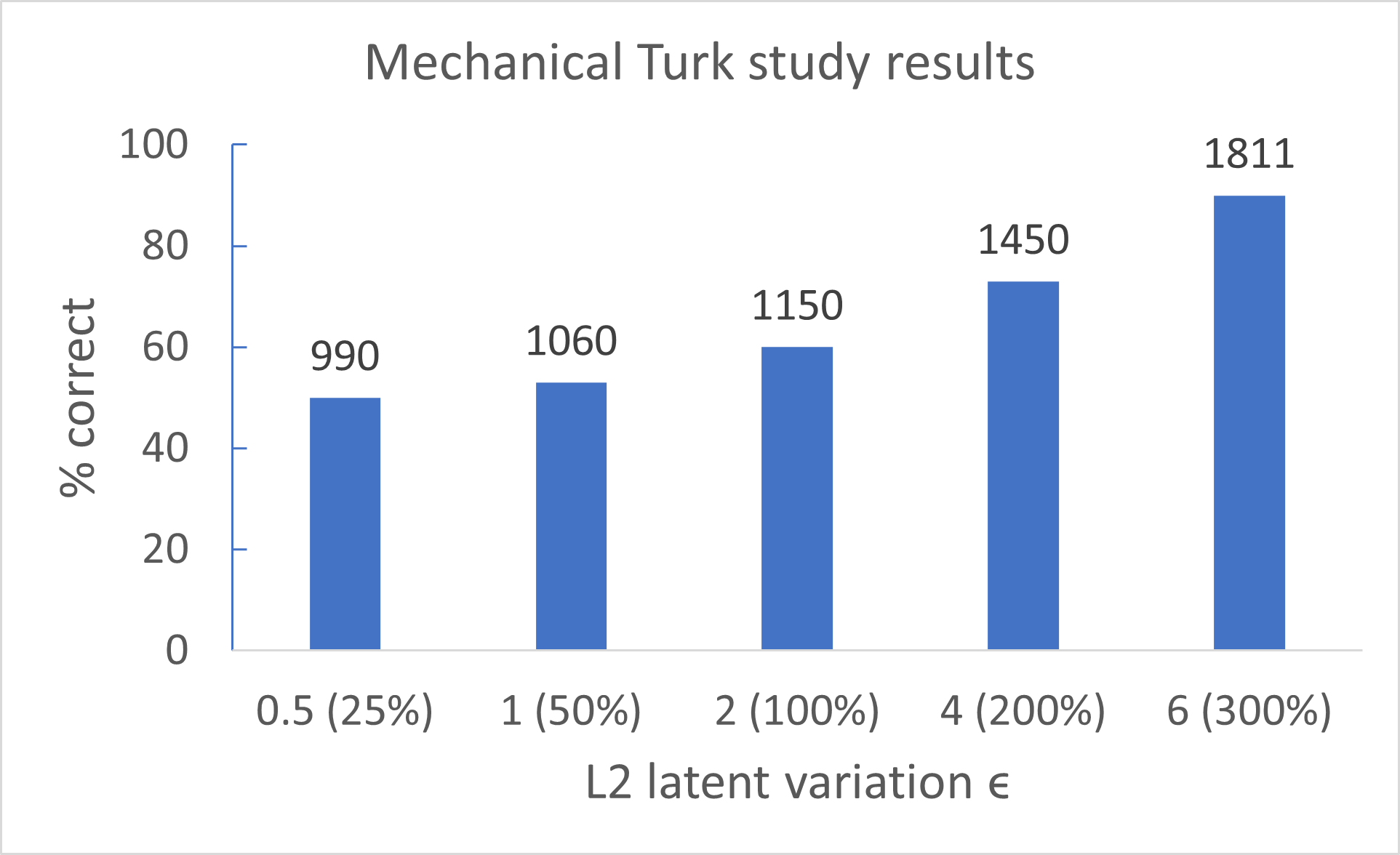}
    \caption{ Amazon MTurk study involving 2000 human evaluations for each value of $\epsilon$ in a study of CheXpert Chest X-ray images. We showed pairs of real and generated images to the MTurk workers who chose to take part in our study and asked them to identify which image is real and which is fake (generated). A value close to 50\% indicates that the MTurk workers were unable to distinguish the real images from the generated ones, indicating that the generated images looked as good as the real images.
}
    \label{tb:mturk}
\end{wrapfigure}

\section{Related Work}
\label{sec:relatedsurvey}

Multiple prior works have studied a subset of the audit problem in terms of adversarial robustness~\citep{raghunathan2018certified,dvijothamefficient2019}. Adversarial examples are small perturbations to the input (for images at the level of pixels; for text at the level of words) which mislead the deep learning model. Generating adversarial examples has been widely studied~\citep{xu2018structured,zhang2019limitations} and recent approaches have devised defenses against adversarial examples~\citep{robustness,crownIBP}. Some of these approaches like~\citep{IBP} and~\citep{crownIBP} provide provable verification bounds on an output specification based on the magnitude of perturbations in the input. These approaches have largely showed robustness and verification to norm bounded adversarial perturbations without trying to verify with respect to more semantically aligned properties of the input. 

Some papers~\citep{mohapatra2020towards,spatial,geometric} consider semantic perturbations like rotation, translation, occlusion, brightness change etc. directly in the pixel space, so the range of semantic variations that can be considered are more limited. This is a distinction with AuditAI, where by perturbing latent codes directly (as opposed to pixels), the range of semantic variations captured are much larger.~\citet{wong2020learning} model perturbation sets for images with perturbations (like adversarial corruptions) explicitly such that a conditional VAE can be trained. While, for AuditAI, we identify directions of latent variations that are present in pre-trained GANs, and as such do not require explicit supervision for training the generative model, and can scale to datasets like ImageNet and CheXpert, and high dimensional images like that in FFHQ.

A related notion to auditing is  expalainable AI~\citep{xai}. We note that auditing is different from this problem because explanations can be incorrect while remaining useful. Whereas in auditing, our aim is not to come up with plausible explanations for \textit{why} the model behaves in a certain way, but to quantify \textit{how} it behaves under semantically-aligned variations of the input.

In~\citep{mitchell2019model} and~\citep{gebru2018datasheets}, the authors proposed a checklist of questions that should be answered while releasing DL models and datasets that relate to their motivation, intended use cases, training details, and ethical considerations. These papers formed a part of our motivation and provided guidance in formalizing the audit problem for DL models. We provide a detailed survey of related works in section~\ref{sec:relatedsurvey}. 
\vspace*{-0.2cm}
\section{Discussion and Conclusion}
\vspace*{-0.2cm}
In this paper we developed a framework for auditing of deep learning (DL) models. There are increasingly growing concerns about innate biases in the DL models that are deployed in a wide range of settings and there have been multiple news articles about the necessity for auditing DL models prior to deployment\footnote{\href{https://www.technologyreview.com/2021/02/11/1017955/auditors-testing-ai-hiring-algorithms-bias-big-questions-remain/}{https://www.technologyreview.com/2021/02/11/1017955/auditors-testing-ai-hiring-algorithms-bias-big-questions-remain/}, Feb 11, 2021}\footnote{\href{https://www.wired.com/story/ai-needs-to-be-audited/}{https://www.wired.com/story/ai-needs-to-be-audited/}, Jul 10, 2019}. Our framework formalizes this audit problem which we believe is a step towards increasing safety and ethical use of DL models during deployment. 

One of the limitations of \algoName is that its interpretability is limited by that of the built-in generative model. While exciting progress has been made for generative models, we believe it is important to incorporate domain expertise to mitigate potential dataset biases and human error in both training \& deployment. 
Currently, \algoName doesn't directly integrate human domain experts in the auditing pipeline, but indirectly uses domain expertise in the curation of the dataset used for creating the generative model. Although we have demonstrated \algoName primarily for auditing computer vision classification models, we hope that this would pave the way for more sophisticated domain-dependent AI-auditing tools \& frameworks in language modelling and decision-making applications. 

\section*{Acknowledgement}
We thank everyone in the AI Algorithms team at NVIDIA Research for helpful discussions throughout the project, and for providing critical feedback during the internal reviews.

\clearpage
\newpage
\bibliography{audit-ai}
\bibliographystyle{bibliography_style}

\clearpage

\appendix
\section{Appendix}

\subsection{Preliminaries: Interval-Bound Propagation}
\label{sec:prelimIBP}
In this section, we provide a preliminary treatment of Interval-Bound Propagation based robust training and verification. Let $f$ be a feedforward classifier model that outputs one of $N$ classes. The input to the network is an image $x_0$, and it has a certain ground truth class $y_\text{true}$. The classifier is trained to minimize a certain misclassification loss. The verification problem is to check whether there is some set $\gS_{in}(x_0)$ around perturbations of $x_0$ such that the classification output of the model remains invariant from that of $y_\text{true}$.

Let  $f$ have $K$ layers $z_{k+1} = h_k(W_kz_k + b_k), \;\;\; k = 0,..., K-1$, such that $z_0\in\gS_{in}(x_0)$, $z_K$ denotes the output logits, and $h_k$ is the activation function for the $k^{\text{th}}$ layer. If we consider the set of variations to be $\gS_{in}(x_0)=\{x| ||x-x_0||_\infty \leq \epsilon\}$, then we want to verify the specification that $\argmax_iz_{K,i}=y_\text{true} \;\; \forall \; z_0\in \gS_{in}(x_0).$ We can write this as a general linear specification $c^Tz_K+d\leq 0 \;\; \forall \; z_0\in \gS_{in}(x_0)$

To verify the specification, IBP based methods search for a counter-example that violates the specification constraint, by solving the following optimization problem and checking whether the optimal value $\leq 0$. 
\begin{equation}
    \max_{x\in \gS_{in}}c^Tz_K+d \;\;\;s.t. \;\;  z_{k+1} = h_k(W_kz_k + b_k) \;\;\; k = 0,..., K-1
\end{equation}

Since solving this optimization problem exactly is NP-hard~\citep{crownIBP}, IBP finds an upper bound on the optimal value and checks whether the upper bound $\leq 0$. To get an upper bound on $z_K$, we can propagate bounds for each activate $z_k$ through axis-aligned bounding boxes, $\underline{z}_{k,i}(\epsilon)\leq z_{k,i} \leq \overline{z}_{k,i}(\epsilon)$, where $i$ indexes dimension. Based on this, the specification is upper-bounded by $\overline{z}_{K,y}(\epsilon)-\underline{z}_{K,y_\text{true}}(\epsilon)$. 

In addition to the verification problem, IBP can be used to perform certified training such that the specification is likely to be satisfied in the first place at the end of training. For this, the overall training loss would consist of two terms: usual classification loss like cross-entropy that maximizes log likelihood $l(z_K,t_\text{true})$, and a loss term that encourages satisfiying the specification $l(\hat{z}_K(\epsilon),t_\text{true})$. Here, $\hat{z}_{K,y}(\epsilon) = \overline{z}_{K,y}(\epsilon)$ (if $y\neq y_\text{true})$ and $\hat{z}_{K,y}(\epsilon) = \underline{z}_{K,y_\text{true}}(\epsilon)$ (if $y= y_\text{true})$. So, the overall loss for certified training can be represented as:

$$L = \chi l(z_K,t_\text{true}) + (1-\chi) l(\hat{z}_K(\epsilon),t_\text{true}) $$

As reported in~\citep{IBP,crownIBP,autolirpa}, to stabilize training, $\chi$ is varied during training such that for a few initial epochs $\chi = 1$ (normal training) and its value is slowly increased so that robust training kicks in gradually.


\subsection{Proof of the Theorem}
\label{sec:proof}
\begin{innercustomtheorem}[\textbf{Verification.} The verifier can verify whether the trained model from the designer satisfies the specifications, by generating a proof of the same] 
Let $x\in\gX$ be the input, $z_0=e(x)$ be the latent code, and $ \gS_{i,in} = \{ z: || z_{i} - z_{0,i} ||_p \leq \epsilon \} $ be the set of latent perturbations. Then, the verifier is guaranteed to be able to generate a proof for verifying whether the linear specification on output logits $c^Tz_K+d \leq 0\;\;\; \forall z\in \gS_{i,in}, z_K=f_d(z)$ (corresponding to unit test $i$) is satisfied.  
\end{innercustomtheorem}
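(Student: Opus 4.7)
The plan is to follow the standard Interval-Bound Propagation (IBP) template, specialized to the latent space $\gZ$ rather than the input space $\gX$. First, I would observe that the perturbation set $\gS_{i,in}$ is a box in one coordinate of $\gZ$ and a singleton on the others, so it admits an immediate axis-aligned interval representation $[\underline{z}_0, \overline{z}_0]$ with $\underline{z}_{0,j}=\overline{z}_{0,j}=z_{0,j}$ for $j\neq i$ and $[\underline{z}_{0,i},\overline{z}_{0,i}]=[z_{0,i}-\epsilon,z_{0,i}+\epsilon]$ in the $p=\infty$ case; for $p<\infty$ I would enclose the $L_p$ ball in its tightest bounding box first, which preserves soundness.

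Next, I would prove by induction on the layer index $k=0,\dots,K-1$ that one can propagate a valid box enclosure $[\underline{z}_k,\overline{z}_k]$ of $\{z_k : z_0\in \gS_{i,in}\}$ through $f_d$. For the affine step $W_k z_k + b_k$, splitting $W_k=W_k^+ + W_k^-$ into its positive and negative parts yields pre-activation bounds $W_k^+\underline{z}_k+W_k^-\overline{z}_k+b_k$ and $W_k^+\overline{z}_k+W_k^-\underline{z}_k+b_k$ that are provably sound. For any monotone activation $h_k$ (ReLU, sigmoid, tanh — the cases handled by AutoLiRPA and used in the paper), applying $h_k$ elementwise to these bounds preserves the enclosure. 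Iterating to the final layer produces $[\underline{z}_K,\overline{z}_K]$.

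Finally, I would bound the specification above by $c^{+\top}\overline{z}_K + c^{-\top}\underline{z}_K + d$ using the same positive/negative split on $c$, and the verifier's proof is simply the trace of these per-layer bounds together with the final numerical check that this scalar is $\leq 0$. Since every step is a closed-form, finitely checkable computation, the verifier is always able to produce such a certificate, which is what the theorem asserts.

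The main obstacle, and the point worth being precise about in the write-up, is that the theorem only promises the ability to \emph{attempt} a proof, i.e.\ soundness — if the IBP upper bound is $\leq 0$, the specification provably holds — whereas completeness is not claimed: a loose bound may fail to certify a specification that is in fact true. I would therefore state the theorem explicitly as a soundness result, verify that the class of activations used in \algoName is monotone (or, more generally, admits a sound elementwise interval transformer via CROWN-IBP relaxations), and defer the tightness question to the empirical bound-refinement used during training via $L_{\text{spec}}$.
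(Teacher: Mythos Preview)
Your proposal is correct and follows essentially the same IBP-in-latent-space approach as the paper's proof, including the explicit handling of the partial-coordinate perturbation set, the elementwise monotonicity assumption on activations, and the explicit soundness-only caveat at the end. The only minor technical difference is that the paper handles the first affine layer for general $p$ via H\"older's inequality (giving a slightly tighter first-layer bound than your box-enclosure of the $L_p$ ball), and writes the subsequent layer-wise propagation in center--radius form rather than your equivalent $W_k^+/W_k^-$ split; these are standard interchangeable IBP formulations and do not change the argument.
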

\begin{proof}
Revisiting the notation in section~\ref{sec:method}, consider a network mapping $f:\gX \rightarrow \gZ \rightarrow \gY$ (denote by $f_d: \gZ \rightarrow \gY$ the downstream task) and a generative model mapping $g:\gX \rightarrow \gZ \rightarrow \hat{\gX}$, such that $f$ and $g$ share a common latent space $\gZ$ by virtue of having a common encoder $e(\cdot)$ (Figure~\ref{fig:main}). Here we provide a proof based on fully-connected feed-forward network architecture for $f_d$, an arbitrary architecture for $f$ and using the Interval-Bound Propagation (IBP) method for obtaining output bounds. For a general network architecture for $f_d$, we refer the reader to a recent library AutoLirpa~\citep{autolirpa}.

Given input $x\in\gX$, let $z_0=e(x)$ be the latent code, and $ \gS_{i,in} = \{ z: || z_{i} - z_{0,i} ||_p \leq \epsilon \}$ be the set of latent perturbations. Let, $z$ be $d-$dimensional. Here, $z_i$ denotes the dimensions(s) of $z$ that are perturbed, such that $z=z_{d-i}\oplus z_i$. Without loss of generality, we can assume all the dimensions in $z_i$ to be contiguous. The verifier can generate a proof by searching for a worst-case violation  
$$\max_{z\in \gS_{i,in}} c^Tz^K+d \;\;\;s.t. \;\;  z^{k+1} = h_k(W_kz^k + b_k) \;\;\; k = 0,..., K-1$$ 
For the sake of simplicity, first consider the case where $j=d$ i.e. all dimensions of $z$ are perturbed. So, $\gS_{in} = \{ z^0: || z^0 - z^{0}_0 ||_p \leq \epsilon$. Here, we have denoted by $z^0$ the latent code in  $\gZ$ space, and will use $z^{k+1} = h_k(W_kz^k + b_k) \;\;\; k = 0,..., K-1$ to denote the subsequent intermediate variables in $f_1: \gZ \rightarrow \gY$. Based on Holder's inequality we can obtain linear bounds for the first variable $z^1$, $[\underline{z}^1,\overline{z}^1]$ as follows: 
$$ \underline{z}^1 = -\epsilon ||\underline{W}^0||_q +  \underline{W}^0z^0 + \underline{b}^0 ;\;\;\;  \overline{z}^1 = -\epsilon ||\overline{W}^0||_q +  \overline{W}^0z^0 + \overline{b}^0 ;\;\;\;  1/p + 1/q = 1  $$

Here, $||\cdot||_q$ denotes computation of the $l_q-$norm for each row in the matrix and the values of $\underline{W}^0,\overline{W}^0$ can be computed with the functions described in Appendix A.1 of~\citep{autolirpa} depending on the type of non-linearity in $h^0(\cdot)$. 

Now, when $j\neq d$ i.e. only $j$ dimensions of $d$ are perturbed, without loss of generality, we can consider all of the $j$ dimensions to be contiguous such that $z^0=z^0_{d-j}\oplus z^0_j$. We can then compute $[\underline{z}_j^1,\overline{z}_j^1]$ as follows:

$$ \underline{z}_j^1 = -\epsilon ||\underline{W}_j^0||_q +  \underline{W}_j^0z^0 + \underline{b}_j^0 ;\;\;\;  \overline{z}^1 = -\epsilon ||\overline{W}_j^0||_q +  \overline{W}_j^0z^0 + \overline{b}_j^0 ;\;\;\;  1/p + 1/q = 1  $$
We can compute $z^1_{d-j}$ as follows:
$$z^1_{d-j} = h^0(W^0z^0_{d-j} + b^0) $$ 
Finally, we can obtain $[\underline{z}^1,\overline{z}^1]$ as follows: 
$$\underline{z}^1 = z^1_{d-j}\oplus\underline{z}_j^1 ; \;\;\; \overline{z}^1 = z^1_{d-j}\oplus\overline{z}_j^1 $$

Now, given the bound $[\underline{z}^1,\overline{z}^1]$ and the relation $z^{k+1} = h_k(W_kz^k + b_k) \;\;\; k = 0,..., K-1$, we can obtain the bound for $z^K$, $[\underline{z}^K,\overline{z}^K]$ by recursively computing successive bounds as follows:

$$\underline{z}^{k+1} = h_k\left(W\left(\frac{\overline{z}^k + \underline{z}^k}{2}\right) + b - |W|\left(\frac{\overline{z}^k - \underline{z}^k}{2}\right)\right)$$

$$\overline{z}^{k+1} = h_k\left(W\left(\frac{\overline{z}^k - \underline{z}^k}{2}\right) + b + |W|\left(\frac{\overline{z}^k - \underline{z}^k}{2}\right)\right)$$

Here we have assumed that $h_k$ is an element-wise monotonic function, which is is true for for most common actions like ReLU, tanh, sigmoid etc. Using $[\underline{z}^K,\overline{z}^K]$, the verifier can construct an upper and lower bound on the solution of $\max_{z\in \gS_{i,in}} c^Tz^K+d $

$$\max_{\underline{z}^K < z < \overline{z}^K} c^Tz^K+d  $$

It is important to note that the proof the verifier generates to check whether the specification holds relies on computing an upper bound $\overline{z}_K$ and checking whether the upper bound $\overline{z}_K< 0$. If the upper bound is not tight, then there may be cases when the true solution $< 0$ (meaning the specification in Theorem~\ref{th:verification} is indeed satisfied) while the upper bound $\overline{z}_K > 0$ (implying the verifier will state that the specification is not satisfied). This form of \textit{err on the side of caution} is a natural consequence of proof by generating a worst case example, and follows directly from IBP. 

\end{proof}

\newpage

\subsection{Implementation Details}
\label{sec:implementationappendix}
We use a 32-GB V100 GPU for all our experiments. The implementation is in Python with a combination of PyTorch and Tensorflow deep learning libraries. Details about the specific experiments are mentioned below:

\noindent\textbf{ImageNet.} We use a pretrained BigBiGAN \url{https://colab.research.google.com/github/tensorflow/hub/blob/master/examples/colab/bigbigan_with_tf_hub.ipynb} as the base generative model, such that the encoder encodes images to the latent space. The downstream classifier from the latent space consists of 5 fully connected layers with ReLU non-linearities. We use ADAM optimizer with a learning rate of 0.0005. For each value of $\epsilon$ $L_2$ perturbation in Table 1, we train for 100 epochs, using the AutoLiRPA library's implementation of CrownIBP \url{https://github.com/KaidiXu/auto_LiRPA}. The first 5 epochs are standard training for warming up, and the rest of the epochs are robust training. The training time for 100 epochs is about 8 hours.

\noindent\textbf{CheXpert.} We train a PGGAN \url{https://github.com/tkarras/progressive_growing_of_gans} on the CheXpert dataset. In order to learn latent embeddings from raw images, we need an inference mechanism. For this, we perform GAN inversion of this pretrained generator, using \url{https://github.com/davidbau/ganseeing/blob/release/run_invert.sh}  to obtain an encoder for projecting images to the latent space. The downstream classifier from the latent space consists of 5 fully connected layers with ReLU non-linearities. We use ADAM optimizer with a learning rate of 0.0005. For each value of $\epsilon$ $L_2$ perturbation in Table 1, we train for 100 epochs, using the AutoLiRPA library's implementation of CrownIBP \url{https://github.com/KaidiXu/auto_LiRPA}. The first 5 epochs are standard training for warming up, and the rest of the epochs are robust training. The training time for 100 epochs is about 7 hours.

\noindent\textbf{LSUN and FFHQ.} We train StyleGAN models \url{https://github.com/NVlabs/stylegan2-ada-pytorch} on the FFHQ and LSUN datasets, and perform GAN inversion using \url{https://github.com/genforce/idinvert_pytorch} to obtain respective encoders for a disentangled latent space. The downstream classifier from the latent space consists of 5 fully connected layers with ReLU non-linearities. We use ADAM optimizer with a learning rate of 0.0005. For each value of $\epsilon$ $L_2$ perturbation in Table 1, we train for 100 epochs, using the AutoLiRPA library's implementation of CrownIBP \url{https://github.com/KaidiXu/auto_LiRPA}. The first 5 epochs are standard training for warming up, and the rest of the epochs are robust training.  The training time for 100 epochs is about 7 hours each.
\begin{figure}[h!]
    \centering
    \includegraphics[width=0.97\textwidth]{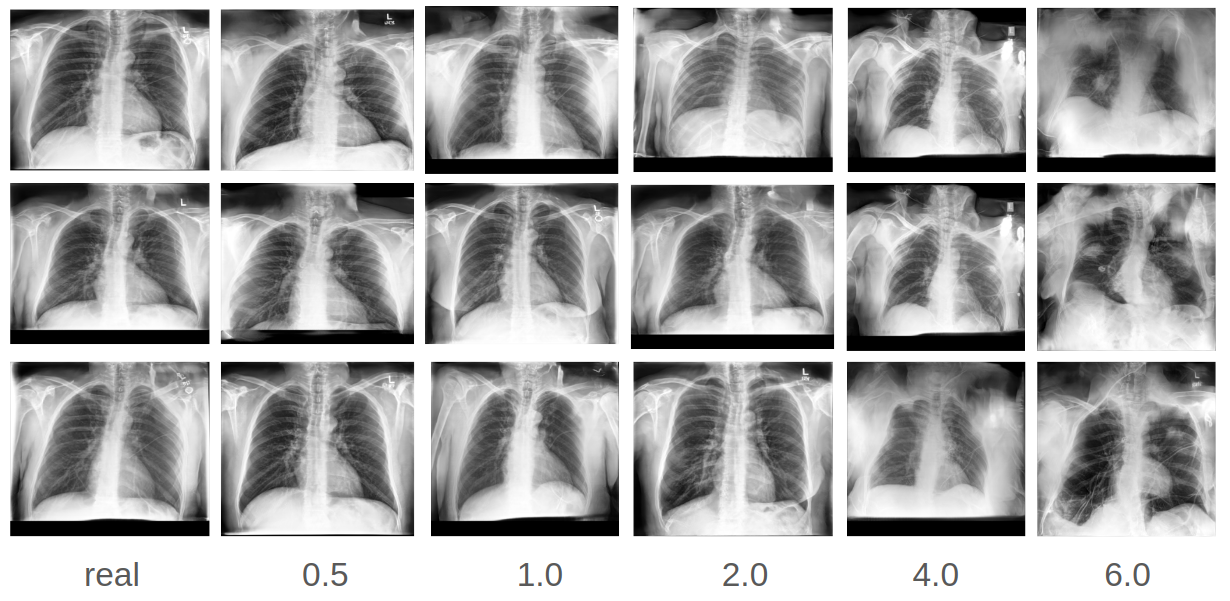}
    \caption{Each column shows images from a aprticular category that are presented in the HITs for the Amazon MTurk Experiment. For each HIT, we show pairs of images such that one image is from the \textit{real} category and the other image is a genertaed image from one of the other categories.}
    \label{fig:mturkappendix}
\end{figure}

\subsection{Details about the Amazon MTurk study}
\label{sec:mturkappendix}
Here we provide more details about the setup for the Amazon MTurk study whose results we presented in Fig. 4. We designed each Human Intelligence Task (HIT) to be comprised of 10 pairs of images sequentially presented, such that each pair consists of a real and a generated image. For different values of $\epsilon$ we show examples of images from each of the categories presented in the HITs through  Fig.~\ref{fig:mturkappendix}. 

We recruited participants only from the ``Master Worker'' category. As defined in \url{https://www.mturk.com/worker/help}, \textit{``a Master Worker is a top Worker of the MTurk marketplace that has been granted the Mechanical Turk Masters Qualification.''} These Workers have consistently demonstrated a high degree of success in performing a wide range of HITs across a large number of Requesters. The participants were paid $0.5$ dollars for each HIT (i.e. for 10 pairs of images). Each pair was shown to the participants for 2 seconds, after which the images disappeared from the screen, but they could take as much time as they wanted to decide and form a response. We restricted the number of HITs per participant to 20. 

The participants were shown the exact text below: \textit{Choose the more realistic image between options A and B. For each pair, you have 5 seconds to view the image and unlimited time to make the decision. Each task consists of 10 image pairs.}

\subsection{Results of verification with pre-trained classifier}
In this experiment, we pre-train the classifier through certified training with $\epsilon = 1.0$ and perform verification for different values of $\epsilon$. 
\begin{table}[t]
\centering
\caption{In this experiment, we pre-train the classifier through certified training with $\epsilon = 1.0$ and perform verification for different values of $\epsilon$. Verified Error fraction on the \textbf{FFHQ} and \textbf{LSUN} datasets. For FFHQ, the task is to classify whether eyeglasses are present, and the unit tests correspond to variations with respect to pose and expression of the face. For LSUN Tower, the task is to classify whether green vegetation is present on the tower, and the unit tests correspond to variations with respect to clouds in the sky and brightness of the scene. Results are on the held-out test set of the dataset and correspond to training the classifier with the same $\epsilon$ during training. S.D. is over four random seeds of training. Lower is better.}
\setlength{\tabcolsep}{3.0pt}
\begin{tabular}{@{}cccccc@{}}
\toprule
\textbf{FFHQ}            & $\mathbf{\epsilon=0.5(25\%)}$ & $\mathbf{\epsilon=1.0 (50\%)}$ & $\mathbf{\epsilon=1.5 (75\%)}$ & $\mathbf{\epsilon=2.0 (100\%)}$ & $\mathbf{\epsilon=3.0 (150\%)}$  \\ \midrule
\textbf{Pose test}       & 0.005$\pm$0.01           & 0.20$\pm$0.05                    & 0.39$\pm$0.02                    & 0.51$\pm$0.04                    & 0.56$\pm$0.05           \\
\textbf{Expression test} & 0.005$\pm$0.005           & 0.13$\pm$0.04                    & 0.33$\pm$0.04                    & 0.57$\pm$0.01                    & 0.58$\pm$0.03           \\ \midrule
\textbf{LSUN Tower}      & $\mathbf{\epsilon=0.5(25\%)}$ & $\mathbf{\epsilon=1.0 (50\%)}$ & $\mathbf{\epsilon=1.5 (75\%)}$ & $\mathbf{\epsilon=2.0 (100\%)}$ & $\mathbf{\epsilon=3.0 (150\%)}$ \\ \midrule
\textbf{Cloud test}       & 0.01$\pm$0.01           & 0.14$\pm$0.02                    & 0.36$\pm$0.04                    & 0.47$\pm$0.06                    & 0.52$\pm$0.04          \\
\textbf{Brightness test} & 0.005$\pm$0.01           & 0.18$\pm$0.03                    & 0.38$\pm$0.05                   & 0.52$\pm$0.06                    & 0.57$\pm$0.04         
\\ 
\bottomrule
\end{tabular}
\label{tb:ffhqlsun_pretrain}
\end{table}

\subsection{Extended Related Works}
\label{sec:relatedsurvey}

\noindent\textbf{Adversarial robustness.} Multiple prior works have studied a subset of the audit problem in terms of adversarial robustness~\citep{adversarial1,adversarial2,IBP,zhang2018crown,Singh2019robustness,weng2018towards,singh2018fast,wang2018efficient,raghunathan2018certified,raghunathan2018semidefinite,dvijothamefficient2019}. Adversarial examples are small perturbations to the input (for images at the level of pixels; for text at the level of words) which mislead the deep learning model.
Generating adversarial examples has been widely studied~\citep{generateadversarial,athalye2018obfuscated,carlini2017adversarial,carlini2017towards,goodfellow2014explaining,madry2018towards,papernot2016distillation,xiao2019meshadv,xiao2018generating,xiao2018spatially,eykholt2018robust, chen2018attacking,xu2018structured,zhang2019limitations} and recent approaches have devised defenses against adversarial examples~\citep{guo2017countering, song2017pixeldefend, buckman2018thermometer, ma2018characterizing, samangouei2018defense,xiao2018characterizing,xiao2019advit,robustness,robustness1,IBP, crownIBP}. Some of these approaches like~\citep{IBP} and~\citep{crownIBP} provide provable verification bounds on an output specification based on the magnitude of perturbations in the input. These approaches have largely showed robustness and verification to norm bounded adversarial perturbations without trying to verify with respect to more semantically aligned properties of the input. Some papers~\citep{mohapatra2020towards,spatial,geometric} consider semantic perturbations like rotation, translation, occlusion, brightness change etc. directly in the pixel space, so the range of semantic variations that can be considered are more limited. This is a distinction with AuditAI, where by perturbing latent codes directly (as opposed to pixels), the range of semantic variations captured are much larger. In~\citep{wong2020learning}, the authors model perturbation sets for images with perturbations (like adversarial corruptions) explicitly such that a conditional VAE can be trained. While, for AuditAI, we identify directions of latent variations that are present in pre-trained GANs, and as such do not require explicit supervision for training the generative model, and can scale to datasets like ImageNet and CheXpert, and high dimensional images like that in FFHQ.

\noindent\textbf{Neural network interpretability.} With the rapid rise of highly reliable generative models, many recent works have sought to analyze the latent space of such models and how the latent space can be manipulated to obtain semantic variations in the output image.~\citep{paintbyword} use text embeddings from CLIP~\citep{CLIP} to perform directed semantic changes in the generated images, for example introduction of a particular artifact in the room or change in the color of a bird.~\citep{manipulation1,manipulation2,manipulation3} identify latent code dimensions corresponding to pose of the body/face, expression of the face, orientation of beds/cars in the latent space of generative models like PGGAN~\citep{PGGAN} and StyleGAN~\citep{stylegan}. By manipulating these latent dimensions, the output images can be controllably varied. In addition, there have been other papers that sought to understand what the neural network weights are learning in the first place~\citep{understanding1,beenkim} in an attempt to explain the predictions of the networks. 

\noindent\textbf{Auditing.} While we proposed a framework for auditing of DL models, a complimentary problem is to audit datasets that are used for training DL models. In~\citep{gebru2018datasheets}, the authors proposed a checklist of questions to be answered while releasing datasets that relate to the motivation, composition, collection process, processing, and distribution of the dataset. In~\citep{mitchell2019model}, the authors proposed a checklist of questions that should be answered while releasing DL models that relate to their motivation, intended use cases, training details, and ethical considerations. These papers formed a part of our motivation and provided guidance in formalizing the audit problem for DL models.

\end{document}